\DeclareRobustCommand\onedot{\futurelet\@let@token\@onedot}
\def\@onedot{\ifx\@let@token.\else.\null\fi\xspace}
\DeclareRobustCommand\onedot{\futurelet\@let@token\@onedot}
\def\@onedot{\ifx\@let@token.\else.\null\fi\xspace}
\def\eg{\emph{e.g}\onedot} 
\def\ie{\emph{i.e}\onedot}
\def\etal{\emph{et al}\onedot}
\begin{document}

\title{From Fake to Real: Pretraining on Balanced Synthetic Images to Prevent Spurious Correlations in Image Recognition} 

\titlerunning{FFR: Pretraining on Balanced Synthetic Images to Prevent Bias}

\author{Maan Qraitem\inst{1}  \and
Kate Saenko\inst{1}  \and
Bryan A. Plummer \inst{1}}

\authorrunning{M.~Qraitem et al.}

\institute{$^1\text{Boston University}$\\
\{\tt\small mqraitem,saenko,bplum\}@bu.edu\\}

\maketitle

\begin{abstract}
    Visual recognition models are prone to learning spurious correlations induced by a biased training set where certain conditions $B$ (\eg, Indoors) are over-represented in certain classes $Y$ (\eg, Big Dogs). Synthetic data from off-the-shelf large-scale generative models offers a promising direction to mitigate this issue by augmenting underrepresented subgroups in the real dataset. However, by using a mixed distribution of real and synthetic data, we introduce another source of bias due to distributional differences between synthetic and real data (\eg synthetic artifacts). As we will show, prior work's approach for using synthetic data to resolve the model's bias toward $B$ do not correct the model's bias toward the pair $(B, G)$, where $G$ denotes whether the sample is real or synthetic. Thus, the model could simply learn signals based on the pair $(B, G)$ (\eg, Synthetic Indoors) to make predictions about $Y$ (\eg, Big Dogs). To address this issue, we propose a simple, easy-to-implement, two-step training pipeline that we call From Fake to Real (FFR). The first step of FFR pre-trains a model on balanced synthetic data to learn robust representations across subgroups. In the second step, FFR fine-tunes the model on real data using ERM or common loss-based bias mitigation methods. By training on real and synthetic data separately, FFR does not expose the model to the statistical differences between real and synthetic data and thus avoids the issue of bias toward the pair $(B, G)$. Our experiments show that FFR improves worst group accuracy over the state-of-the-art by up to 20\% over three datasets. Code available: \url{https://github.com/mqraitem/From-Fake-to-Real} 
    \keywords{ Spurious Correlations \and Synthetic Data Augmentation}
    \end{abstract}

\section{Introduction}

\begin{figure}[t]
\centering
\includegraphics[width=\linewidth]{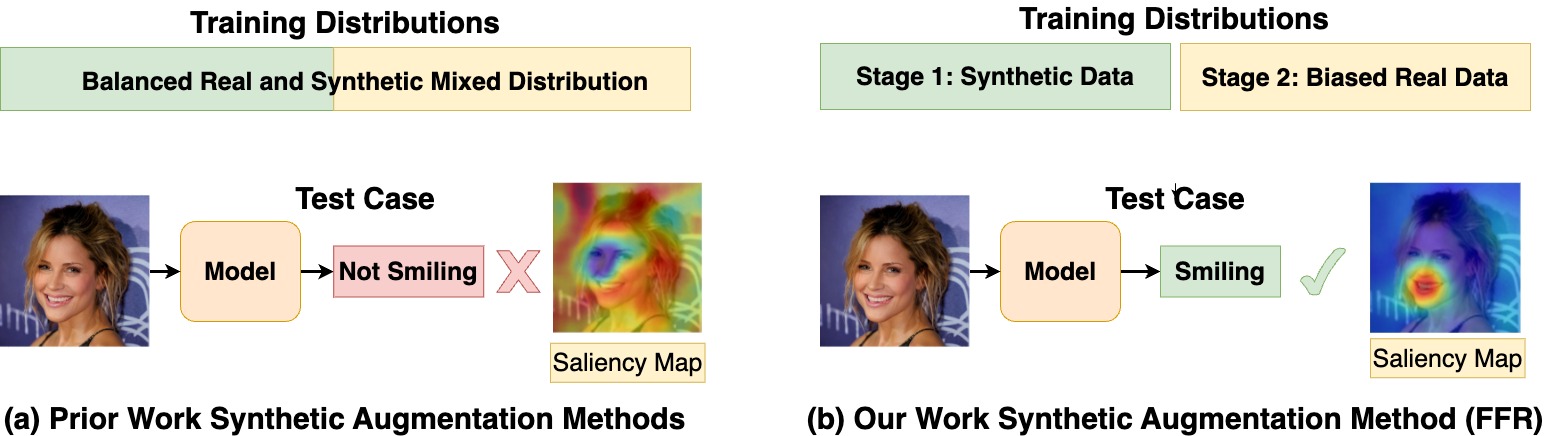}
\caption{Comparison of RISE \cite{Petsiuk2018rise} saliency maps produced for a model trained to predict the attribute Smiling $Y$ given a bias toward Gender $B$ (Most Women are not Smiling) using: (a)  prior work in Synthetic Augmentation  \cite{Ramaswamy_2021_CVPR, Mondal2023MinorityOF}, which do not address the unexpected bias toward the pair (Gender $B$, Data Source $G$), \eg, (Female-Synthetic vs Female-Real) and, thus, use spurious features leading to an incorrect prediction (not smiling) (b)  our approach FFR where the Synthetic and Real Data are separated into two training stages, thereby mitigating the bias suffered by prior work enabling us to learn the correct features (Mouth) and correctly predict Smiling.} 
\label{fig:figure_1}
\end{figure}

Visual recognition models are prone to learning spurious correlations (Bias) \cite{revise_olga,zhao2021captionbias,Meister2022GenderAI}. These correlations frequently arise due to an imbalance in the training set. For example, given a dataset with classes $Y$ (\eg, Smiling vs Not Smiling), there exists a confounding bias variable $B$ (Gender: Male and Female) in the training set such that one bias group (\eg, Male) is represented in one class more than others (\eg, most males are Smiling). This leads models to mistakenly use the bias signal $B$ (gender) to predict $Y$ (Smiling). Rapid progress in large-scale generative models, notably diffusion-based models \cite{Ho2020DenoisingDP, saharia2022photorealistic}, provides a clear mitigation method that alleviates bias using synthetic data: use a mixed distribution of synthetic and real data that alleviates the real dataset bias.  

Prior work has introduced several methods to achieve this goal. For example, Additive Synthetic balancing (ASB) \cite{Ramaswamy_2021_CVPR,Sharmanska2020contrastive} augments the biased real dataset with a balanced synthetic dataset. Uniform Synthetic Balancing (USB) generates enough data to uniformly balance the dataset subgroups \cite{wang2020deep,Mondal2023MinorityOF}, \ie, each subgroup will have the same number of samples\footnote{Refer to the supplementary for a visual comparison of prior work in synthetic augmentation (\eg,~\cite{wang2020deep,Ramaswamy_2021_CVPR,Sharmanska2020contrastive,Mondal2023MinorityOF}) and our approach.}. However, by training the real and synthetic data samples at the same time, a model may simply learn to identify correlations between bias $B$ and whether the data was real or generated $G$ (\eg by using generative model artifacts \cite{Corvi_2023_ICASSP}). For example, in the setting where the training data contained mostly smiling men but few smiling women, prior work may simply learn that synthetically generated women smile (but women in real images do not). Thus, as shown in   Figure \ref{fig:figure_1}(a), models trained using strategies of prior work (\eg, ASB and USB) may focus on unrelated features for the target task. In addition, assuming some distributional differences between synthetic and real data, we provide a theoretical analysis that shows that every possible augmentation of a biased dataset with synthetic data will exhibit some bias toward $(B, G)$; \ie, $P_{D}(Y | B, G) \neq P_D(Y)$. 

To mitigate this problem, we rethink how synthetic data is used for bias mitigation by developing a simple, easy-to-implement, yet effective two-stage training pipeline called From Fake to Real (FFR). The first stage involves pre-training on balanced synthetic data where we learn robust representations across subgroups. In the second step, FFR fine-tunes the model on real data using ERM or common loss based bias mitigation methods \cite{hong2021unbiased,8953715,ryu2018inclusivefacenet,Tartaglione_2021_CVPR,Sagawa2020DistributionallyRN}. By separating the two data sources (\ie Real and Synthetic) into two different training stages, FFR doesn't expose the model to the statistical differences between real and synthetic data (\eg generative model artifacts \cite{Corvi_2023_ICASSP}) and, thus, avoids the issue of bias that might arise from training on these two sources of data together. Effectively, the synthetic data acts as a source of unbiased representations for each subgroup, leading to improved performance when training with the real data using ERM or loss-based bias mitigation methods in the second step. As shown in Figure \ref{fig:figure_1}(b), this enables FFR to learn more relevant features rather than focusing on spurious background features.

To evaluate our approach, we expand on the experimental frameworks of prior work, which are limited to one bias rate per dataset  \cite{sagawa2019distributionally, qraitem2023bias, joshi2023towards}. Instead, we conduct systemic analysis over three datasets, CelebA-HQ \cite{CelebAMask-HQ}, UTK-Face \cite{zhifei2017cvpr}, and SpuCo Animals  \cite{joshi2023towards}, and a range of bias rates ranging from moderate to severe resulting in over 5k experiments in total.

Our contributions are summarized below:

\begin{itemize}[nosep,leftmargin=*]
    \item We introduce a simple, easy to implement, yet effective, two-step training pipeline (FFR) that uses synthetic data to alleviate the issue of spurious correlations (Bias). Unlike prior work, our pipeline avoids the issue of bias to distributional differences between real-synthetic data (\eg, generative model artifacts) and, thus, is more effective at mitigating bias.
    \item We provide a theoretical analysis on how augmentation with synthetic data results in an unexpected bias toward synthetic artifacts. 
    \item Comprehensive experiments over three datasets (UTK-face, CelebA-HQ, and SpuCo Animals) and at least four bias strengths per dataset validate our method's effectiveness. Indeed, FFR improves performance over state-of-the-art worst accuracy by up to $20\%$.  
\end{itemize}

\section{Related Work}

\noindent\textbf{Mitigating Bias with Synthetic Data.} As noted in the Introduction, some limited work exists on using synthetic data augmentation to address issues due to imbalanced training data. This includes Uniform Synthetic Balancing (USB) \cite{wang2020deep,Mondal2023MinorityOF}, which balances underrepresented subgroups, where subgroups are the intersection of classes $Y$ and bias groups $B$. This, in turn, effectively ensures that $Y$ is statistically independent from $B$, \ie, $P_{\bar{D}}(Y | B) = P_{\bar{D}}(Y)$  where $\bar{D}$ is the combined dataset of real and synthetic data.  Additive Synthetic Balancing (ASB) \cite{Ramaswamy_2021_CVPR,Sharmanska2020contrastive} augments a biased real dataset with a balanced synthetic dataset. In our work, we show how both approaches (USB and ASB) result in models that are biased toward $(B, G)$ where $G = \{Real, Synthetic\}$, \ie, the variable that differentiates between real and synthetic data.   We could attempt to mitigate this issue by combining USB and ASB with loss-based bias mitigation methods (\eg, \cite{hong2021unbiased,8953715,ryu2018inclusivefacenet,Tartaglione_2021_CVPR,Sagawa2020DistributionallyRN}). However, in order to account for the new source of bias from $(B, G)$ where $G = \{Real, Synthetic\}$, this approach doubles the number of bias groups  ($|(B, G)| = |B||G| = 2|B|$) which increases the optimization difficulty, reducing performance as we will show in Section \ref{sec:systemic_analysis_exp}.  Instead, our two-stage training pipeline addresses the issue of new biases being introduced from using synthetic data by training both real and generated data separately.
\smallskip

\noindent\textbf{Synthetic-Data-Free Mitigation Methods.} Also related to our task are methods that use architecture changes and/or alter the training procedures to mitigate dataset bias \cite{ryu2018inclusivefacenet,8953715,Wang_2020_CVPR,hong2021unbiased,Tartaglione_2021_CVPR,sagawa2019distributionally}. For example, Sagawa~\etal~\cite{sagawa2019distributionally} proposes GroupDRO (Distributionally Robust Neural Networks for Group Shifts), a regularization procedure that adapts the model optimization according to the worst-performing group. More recently, a series of works seeks to mitigate bias assuming no access to bias labels in training time \cite{liu2021just,ahmed2020systematic,zhang2022correct,kirichenko2022last}. Most recently, DFR \cite{kirichenko2022last} showed how fine-tuning a model on a small balanced validation (after being trained on the biased training set) achieves state-of-the-art performance. Our work complements these efforts by introducing a novel pipeline for using synthetic data that further boosts the performance of these methods, especially in high-bias settings.  Thus, as we will show, these methods and our approach can be combined to boost performance over either when they are used alone.
\smallskip

\noindent\textbf{Uncovering Spurious Correlations.} In our work, we are interested in mitigating spurious correlations; a spurious correlation results from underrepresenting a certain group of samples (\eg, samples with the color red) within a certain class (\eg, planes) in the training set. This leads the model to incorrectly correlate the class with the over-represented group. For example, prior work  has shown that several datasets exhibit spurious correlations \cite{Meister2022GenderAI,li2023whac,hirota2022gender}. For example, Meister~\etal~\cite{Meister2022GenderAI} reports that models trained on COCO \cite{lin2014microsoft} and OpenImages \cite{kuznetsova2020open} learn spurious correlations with respect to various gender artifacts. Li~\etal~\cite{li2023whac} showed that models trained on ImageNet spuriously correlate the Carton class with Chinese watermarks. Hirota~\etal~\cite{hirota2022gender} showed how several VQA dataset encodes racial and gender biases.  Our work complements these effort by introducing a more effective way of using synthetic data to mitigate spurious correlations.

\section{Synthetic Data for Robust Representations against Bias in Image Recognition}

Visual classification models can often rely on spurious correlations in the training set that do not reflect their real-world distribution. More concretely, given a dataset of images $X$, classes $Y$, and bias signal $B$ (\eg, Gender: Male/Female), a biased model relies on the signal in $X$ that infer $B$ to make predictions $\hat{Y}$. This is because the distribution $P_D(Y | B) \neq P_D(Y)$, \ie, the training set encodes some correlation between the classes and the biases. For example, given a class $y$ (\eg, Smiling), a certain bias group $b$ (\eg, Male) might be over-represented compared to others. Therefore, a model might mistakenly predict the class of an image (\eg, Not Smiling) as the wrong class (\eg, Smiling) because the signal $b$ (Male) is present in the image (\eg, Man is Not Smiling) \cite{Sagawa2020DistributionallyRN,qraitem2023bias}.

To address this issue, our work explores using synthetic data from generative models as we will discuss in detail below. Section \ref{sec:motivation} explores how augmenting the real dataset with synthetic data results in a bias towards distributional differences between synthetic and real data. Section \ref{sec:ffr} introduces From Fake to Real (FFR); our novel two-stage pipeline that addresses this issue. 

\subsection{Motivation}
\label{sec:motivation}

In this section, we explore a critical problem with the class of solutions that mitigates dataset bias by augmenting biased datasets with synthetic data, \eg, Additive Synthetic Balancing (ASB) \cite{Ramaswamy_2021_CVPR,Sharmanska2020contrastive} and Uniform Synthetic Balancing (USB) \cite{wang2020deep,Mondal2023MinorityOF}. These approaches don't consider the fact that the distribution of synthetic data is not the same as the distribution of real data. Indeed, while research on generative models has made significant progress in producing ever more realistic images, especially with the recent advent of diffusion models \cite{Ho2020DenoisingDP, saharia2022photorealistic}, there might still be some distributional differences between the real and synthetic data. For example, Corvi~\etal~\cite{Corvi_2023_ICASSP} demonstrates how state-of-the-art diffusion models leave fingerprints in the generated images that recognition models could use to differentiate between real and synthetic data. 

Assuming real and synthetic data are drawn from different distributions, and we are given a biased dataset $D$,  \ie $P_{D}(Y | B) \neq P_D(Y)$, we argue that it is impossible to guarantee that we can create $\bar{D}$ where $Y$ is not biased toward the pair $(B, G)$. Formally:

\begin{restatable*}{theorem}{theoremone}
\label{thm:1}
Assume we are given dataset $D$ where $P_{D}(Y | B) \neq P(B)$ such that $Y$ are target labels and $B$ are biased group labels (\ie dataset is biased). Assume $\bar{\mathcal{D}}$ represent all possible versions of the dataset augmented with synthetic data such that $G = \{Real, Synthetic\}$, then for every $\bar{D} \in \bar{\mathcal{D}}$, $P_{\bar{D}}(Y | B, G) \neq P_{\bar{D}}(Y)$ where $G$ are the synthetic/real labels.  
\end{restatable*}

Refer to the supplementary for a proof. As shown, this Theorem guarantees that it is impossible to create an augmented version of the dataset $D$, \ie, $\bar{D}$, without $\bar{D}$ exhibiting some bias toward $(B, G)$. Therefore, this implies that both methods from prior work, ASB \cite{Ramaswamy_2021_CVPR,Sharmanska2020contrastive} and USB \cite{wang2020deep,Mondal2023MinorityOF}, may rely on biased signals stemming from $(B, G)$ to make predictions.

To gain some intuition, consider the following illustrative example for Uniform Synthetic Balancing (USB): in an attempt to mitigate the dataset bias of class Landbirds being mostly on Land and Waterbirds being most Water, a significant number of synthetic samples of Landbirds on Water and Waterbirds on Land are added to the dataset. While this means that there is an equal number of Landbirds and Waterbirds on Land and on Water in the combined dataset, \ie, $P_{\bar{D}}(Y | B) = P_{\bar{D}}(Y)$, this also means that there are significantly more \textit{Synthetic} Landbirds on Water than there are \textit{Synthetic} Landbirds on Land. Assuming that the model could differentiate between real and synthetic images, then it is likely advantageous to learn the signal pair (Water, Synthetic) in order to predict the class Landbird while the signal (Water, Real) predicts the class Waterbirds.

\begin{figure}[t!]
    \centering
    \includegraphics[width=\linewidth]{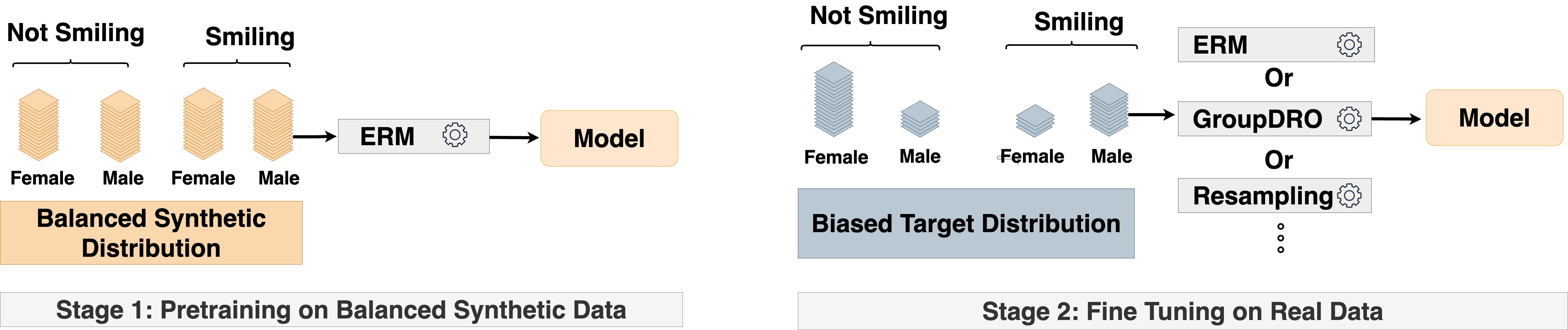}
    \caption{An overview of From Fake to Real (FFR) that incorporates synthetic data to mitigate bias. In Stage 1, we pretrain on a balanced synthetic dataset where we learn robust representations across subgroups. In Stage 2, we fine-tune the model on real data using ERM or common synthetic-data-free bias mitigation methods. By training on real and synthetic data separately, FFR does not expose the model to the statistical differences between real and synthetic data and thus avoids the issue of bias between the two data sources. Refer to Section \ref{sec:ffr} for further discussion. } 
    \label{fig:figure_2}
\end{figure}

\subsection{From Fake to Real (FFR): A Two-Stage Training Pipeline}
\label{sec:ffr}

Our approach, From Fake to Real (FFR), aims to address the issue in prior work outlined in Section \ref{sec:motivation}, where models learn a bias between the target labels $Y$ and the pair labels $(B, G)$. The key to our approach is the separation of training on the two data sources, real and synthetic, into two different stages. The model is exposed to one data source at a time, which effectively prevents the use of signals from the pair $(B, G)$ to make predictions as neither appear in the same training step. We provide additional details on our two training stages below: 
\smallskip

\noindent\textbf{Step 1:} FFR pretrains a model $M$ on a balanced synthetic dataset $D_{syn}$ where $P_{D_{syn}}(Y | B) = P_{D_{syn}}(Y)$. To obtain this distribution, we simply deploy a generative model to sample the same number of synthetic data per bias subgroup. This step enables the model $M$ to learn robust initial representations for each subgroup. Refer to Figure \ref{fig:figure_2} (Stage 1) for an overview of this step. Denote the resulting model from this step as $\bar{M}$. 
\smallskip

\noindent\textbf{Step 2:} While Step 1 learns valuable unbiased representations, there is still a distribution shift going from real to synthetic datasets \cite{sariyildiz2023fake}. Therefore, we fine-tune the model $\bar{M}$ from Step 1 on the real dataset to better fit to its distribution. We find that even a simple empirical-risk minimization fine-tuning using the model $\bar{M}$ as an initialization is sufficient to boost performance. However, the real dataset's distribution $D$ is biased, \ie, $P_{D}(Y | B) = P_{D}(Y)$. Thus, some of the benefits of our first stage pretraining are undone as the model might simply relearn the bias. To address this, we combine our two-stage training pipeline with loss-based bias mitigation methods (\eg, \cite{hong2021unbiased,8953715,ryu2018inclusivefacenet,Tartaglione_2021_CVPR,Sagawa2020DistributionallyRN}).  Refer to Figure \ref{fig:figure_2} (Stage 2) for an overview of this step. As we note in our experiments, regardless of the method used in Step 2, we observe a significant performance boost using Step 1's model $\bar{M}$ for initialization.
\smallskip

In summary, FFR is a flexible framework that rethinks the use of synthetic data for bias mitigation. We use FFR to deploy synthetic data to learn initial unbiased representations to improve the performance of training on real data regardless of the method used to train on real data. Therefore, it is generalizable to any bias mitigation method and easy to implement no matter the model architecture. Finally, our framework effectively avoids the issue of bias to distributional differences between real and synthetic data, unlike prior work's methods.

\section{Experiments}
\label{sec:main_exps}

\noindent\textbf{Datasets.} We seek to compare the effect of synthetic augmentation on varying amounts of bias. To that end, we use three standard bias mitigation datasets, namely: CelebA-HQ \cite{CelebAMask-HQ}, SpuCo Animals dataset \cite{joshi2023towards}, and  UTK-Face dataset \cite{zhifei2017cvpr}. SpuCo Animals has one possible bias variable (``Background'') where the bias is $95\%$ (\ie, the majority bias group takes up $95\%$ of the class distribution). Prior work that used CelebA \cite{sagawa2019distributionally} used the attribute ``Blonde Hair,''  which has $\sim 97\%$ bias, and ``Wearing Lipstick,'' which has $\sim 99.9\%$ bias. In contrast, prior work that used Utk-Face \cite{qraitem2023bias} used the Age attribute with $90\%$ bias. However, since we seek to study the effect of different bias settings on the methods' performance, simply comparing performance on different attributes with different biases is not fair,  as it entangles the difficulty of learning different targets (\eg, ``Wearing Lipstick'' vs. ``Blonde Hair'') and the difficulty of learning different bias ratios (97\% vs. 99.9\%). To mitigate this issue, we choose to fix the bias and target attribute per dataset and manually vary the bias according to 5 main ratios ranging from moderate to severe $(90\%, 95\%, 97\%, 99\%, 99.9\%)$ by simply dropping samples appropriately from the minority groups to match the target bias ratio. Specifically, we evaluate using: 1) CelebA-HQ \cite{CelebAMask-HQ}, where we choose ``Smiling'' as the target attribute and ``Gender'' as the bias attribute, 2) UTK-Face \cite{zhifei2017cvpr}, where we use ``Age'' as the bias attribute and ``Gender'' as the target attribute, and 3) SpuCo Animals \cite{joshi2023towards}, where the bias attributes are \{\textit{Indoors, Outdoors, Land, Water}\} and target attributes are \{\textit{Small dogs, Big Dogs, Landbirds, Waterbirds} \}. Note that SpuCo Animals has a minimum bias of $95\%$. Thus, $90\%$ bias is only used for UTK-Face and CelebA-HQ. 
\smallskip

\noindent\textbf{Metrics.} Following \cite{sagawa2019distributionally}, we use Worst Accuracy (WA) to measure the models' spurious behavior. This metric returns the accuracy of the worst-performing subgroup where the subgroup is defined as the intersection of class and bias groups. In addition, we use balanced accuracy (BA), which averages the accuracies of all subgroups \cite{qraitem2023bias}. BA reflects the overall performance of the model while not being biased by the majority of subgroups. 
\smallskip

\noindent\textbf{Implementation Details.} We use a Resnet50 \cite{He2016DeepRL} backbone trained with ADAM \cite{KB14Adam}, where we use grid search to set the learning rate over the validation set. We use default values for the other parameters. Furthermore, we find that freezing the batch norm in FFR Stage 2 to be helpful on some datasets. See the supplementary for additional implementation details for each method. For data generation, we use Stable Diffusion V1.4 \cite{Rombach_2022_CVPR}, where we use the prompt template \textit{A photo of \{\textbf{bias}\} \{\textbf{class}\}} to sample new images. As a powerful generator is not easily accessible for every application, in the supplementary we explore the effect of the quality of the synthetic images on performance.
\smallskip

\noindent\textbf{Methods.} We report the performance of training with three modes of incorporating synthetic data: 

\begin{itemize}[noitemsep,topsep=0pt]
    \item \textbf{None}: No synthetic data is used.
    \item \textbf{USB} \cite{Mondal2023MinorityOF}: Synthetic data is used to uniformly balance the distribution 
    \item \textbf{ASB} \cite{Ramaswamy_2021_CVPR}: A balanced synthetic dataset is added to the real dataset (ASB) 
    \item \textbf{FFR}: our method where we first pre-train on balanced synthetic data using ERM and then fine-tune on real data. 
\end{itemize}

For each mode, we report performance using Empirical Risk Minimization (ERM)  and several popular state-of-the-art bias mitigation methods. More concretely, we report the performance of GroupDRO \cite{sagawa2019distributionally}, Resampling, and Deep Feature Reweighting (DFR) \cite{kirichenko2022last}. GroupDRO is an optimization technique where the contribution of each subgroup loss is weighted by their performance. Resampling oversamples minority subgroups such that each subgroup is equally represented per batch. DFR fine-tunes a linear layer over the feature space on a balanced validation set. Note that Group-DRO and Resampling require access to Bias labels in the training set, while DFR does not.

Note that when no synthetic data is used, the bias mitigation methods are deployed to minimize the bias toward $B$. When combined with USB and ASB, the methods are deployed to mitigate the bias toward $(B, G)$. Finally, when the methods are combined with FFR, they are deployed to mitigate bias toward $B$ only since FFR minimizes the bias toward $G$ by definition. Finally, we fix the number of synthetic data samples used for each method: USB, ASB, and FFR. The fixed size is the number of samples required to balance the dataset in USB.

\subsection{Comparing Synthetic Augmentation Methods with ERM}
\label{sec:systemic_analysis_exp}

Figure \ref{fig:figure_3} compares the performance of our Synthetic Data Augmentation method (FFR) to prior work methods (USB \cite{Mondal2023MinorityOF} and ASB \cite{Ramaswamy_2021_CVPR}) over three datasets and various bias ratios. Note how our method (FFR) either matches or improves the worst and balanced accuracy of ASB and USB over each dataset and each bias ratio. For example, FFR improves over USB and ASB on UTK-Face and Bias ratio 95\% by over $10\%$. This is because, as we discuss in Section \ref{sec:ffr}, FFR addresses the bias between real and synthetic data and, thus, is more able to use both data sources to mitigate the bias effectively.  

\begin{figure*}[t]
    \centering 
\begin{subfigure}{0.32\textwidth}
  \includegraphics[width=\linewidth]{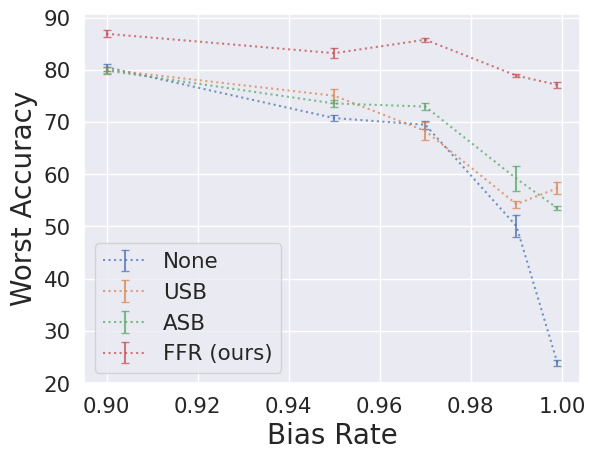}
\end{subfigure}\hfil 
\begin{subfigure}{0.32\textwidth}
  \includegraphics[width=\linewidth]{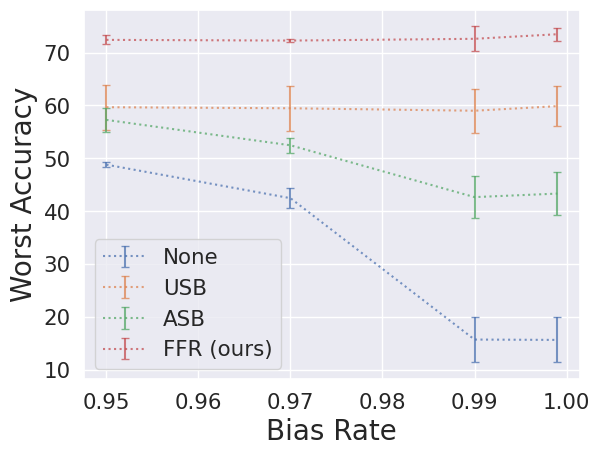}
\end{subfigure}\hfil 
\begin{subfigure}{0.32\textwidth}
  \includegraphics[width=\linewidth]{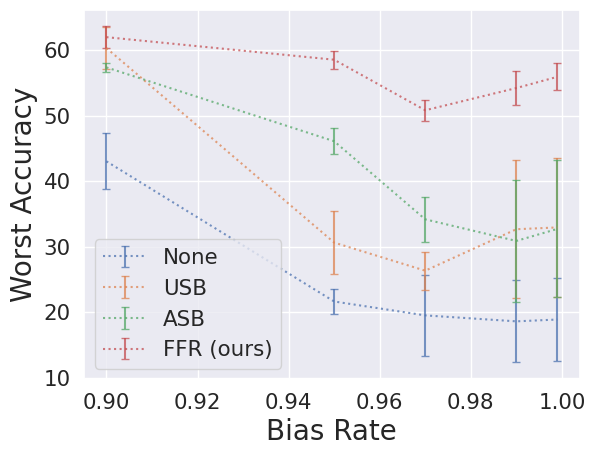}
\end{subfigure}
\medskip

\begin{subfigure}{0.32\textwidth}
  \includegraphics[width=\linewidth]{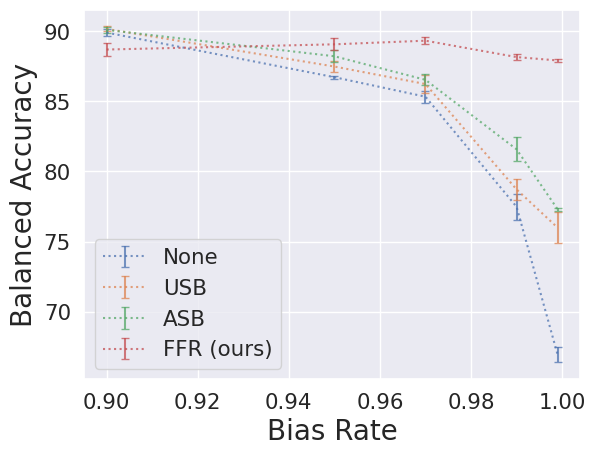}
  \caption{CelebA-HQ \cite{CelebAMask-HQ}}
  \label{figure_3_a}
\end{subfigure}\hfil 
\begin{subfigure}{0.32\textwidth}
  \includegraphics[width=\linewidth]{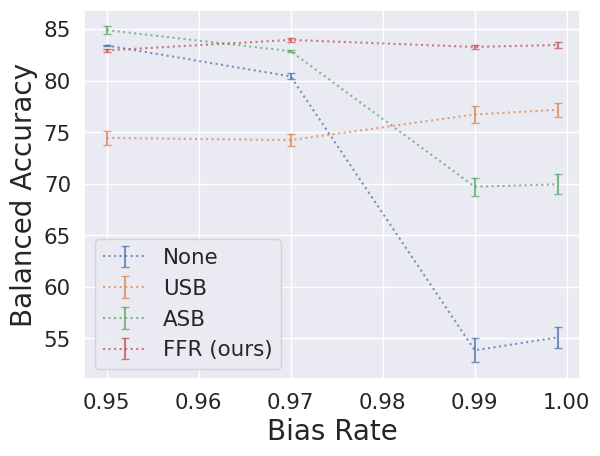}
  \caption{SpuCO Animals \cite{joshi2023towards}}
  \label{figure_3_b}
\end{subfigure}\hfil 
\begin{subfigure}{0.32\textwidth}
  \includegraphics[width=\linewidth]{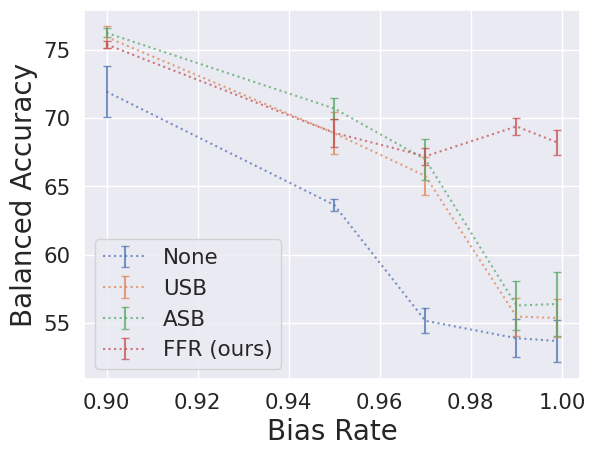}
  \caption{UTK-Face \cite{zhifei2017cvpr}}
  \label{figure_3_c}
\end{subfigure}
\caption{Comparison of performance between the effect: (None) no synthetic data is used, (USB) synthetic data is used to uniformly balance the distribution (extension of prior work on imbalanced classification \cite{Mondal2023MinorityOF}),  (ASB) balanced synthetic data is added to the real dataset \cite{Ramaswamy_2021_CVPR} and (FFR) our method where pretrain on balanced synthetic data and fine tune on real data. Models are trained with ERM. Note how our method either matches or improves the performance of prior work augmentation methods. Refer to Section \ref{sec:systemic_analysis_exp} for discussion.} 
\label{fig:figure_3}
\end{figure*}

More notably, we find that the augmentation methods of prior work result in stable performance on SpuCo Animals across bias ratios, but their performance decreases significantly as bias increases on CelebA-HQ and UTK-Face. This is unlike our method, where the performance remains stable. This demonstrates that our method is more robust to more severe bias.

\subsection{Combining Synthetic Augmentation Methods with Synthetic-Data-Free Bias Mitigation Methods. }
\label{sec:harder_analysis_exp}

In this Section, we combine the synthetic data augmentation methods, namely prior work USB \cite{Mondal2023MinorityOF} and ASB \cite{Ramaswamy_2021_CVPR}, and our method FFR with synthetic-data-Free bias mitigation methods: GroupDRO \cite{sagawa2019distributionally}, DFR \cite{kirichenko2022last}, and Resampling. As we noted at the beginning of Section \ref{sec:main_exps} under Baselines, when combining these methods with ASB and USB, we deploy the synthetic-data-free bias mitigation methods to address the bias toward both $(B, G)$. However, when deployed with FFR, they are implemented to address the bias toward $B$ (the bias toward $G$ is automatically addressed by FFR (Section \ref{sec:ffr})). Figure \ref{fig:figure_4} reports the averaged performance over the three datasets and all bias ratios. For simplicity, denote the synthetic-data-free  methods (GroupDRO, Resampling, DFR) as \textbf{SD-Free} and synthetic-data-augmentation methods (USB, ASB, and FFR) as \textbf{SD-Aug}. Below, we consider the impact of \textbf{SD-Free} on \textbf{SD-Aug} (SD-Free $\xrightarrow[]{}$ SD-Aug). Then we consider the impact of  \textbf{SD-Aug} on \textbf{SD-Free} (SD-Aug $\xrightarrow[]{}$ SD-Free).

\begin{figure}[t]
    \centering 
\begin{subfigure}{0.48\linewidth}
  \includegraphics[width=\linewidth]{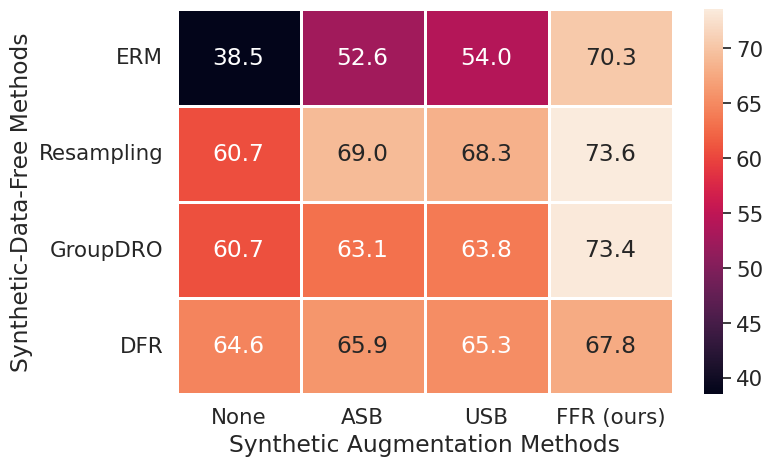}
  \caption{Worst Accuracy}
\end{subfigure}\hfil 
\begin{subfigure}{0.48\linewidth}
  \includegraphics[width=\linewidth]{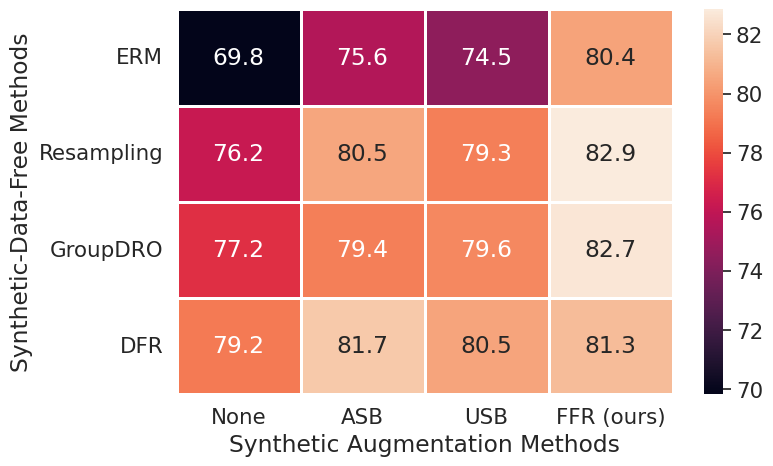}
  \caption{Balanced Accuracy}
\end{subfigure}\hfil 

\caption{Comparing the performance of synthetic-data-free bias mitigation methods, namely GroupDRO \cite{sagawa2019distributionally}, Resampling, and Deep Feature Reweighting (DFR) \cite{kirichenko2022last} with no synthetic augmentation (None) as well as with synthetic augmentation using prior work methods (USB \cite{Mondal2023MinorityOF} and ASB \cite{Ramaswamy_2021_CVPR}) and our method FFR. Performance is averaged across three datasets and five bias ratios. Note how our method (FFR) in column four is best at improving the performance of non-synthetic-data augmentation methods. Refer to \ref{sec:harder_analysis_exp} for further discussion. } 
\label{fig:figure_4}
\end{figure}

\smallskip\noindent\textbf{SD-Free $\xrightarrow[]{}$ SD-Aug}   Note the change in performance from top to bottom in Figure \ref{fig:figure_4}. SD-Free methods significantly improve the performance of SD-Aug methods. For example, the average worst accuracy of ASB improves by 16.4\% (goes from 52.6 \% to 69.0 \%).. We note a similar trend with USB. This is likely because SD-Free methods address some of the bias between data distributions (real and synthetic) that USB and ASB fail to address. However, even when combined with SD-Free methods, ASB and USB still lag behind FFR  even when no SD-Free methods are used (namely row 1 in the Figure where FFR shows 70.3 worst accuracy). This is likely because FFR addresses the bias toward $(B, G)$ by definition while SD-Free methods have to deal with double the number of bias groups in $B$ to address the bias toward $(B, G)$. As a result, this renders the optimization procedure more difficult especially in high bias settings where few samples of the minority groups are available. Overall, these results indicate that FFR by itself is a simple yet effective method of mitigating bias. Nevertheless, when FFR is combined with SD-Free, we see some improvements with GroupDRO and Resampling (by about 3 points on worst accuracy). This is likely the result of addressing some of the bias toward $(B)$ that might be learned in Step 2 of FFR, as discussed in Section \ref{sec:ffr}. 

\smallskip\noindent\textbf{SD-Aug $\xrightarrow[]{}$ SD-Free} Note the change in performance from left to right in Figure \ref{fig:figure_4}. Overall, the performance of SD-Free methods improve as a result of using SD-Aug methods. This is likely the result of SD-Aug methods improving the representations of minority groups, especially in high-bias settings where few samples of the minority groups are available. More notably, the worst accuracy \textit{most} improves when using our SD-Aug method (FFR), where it improves Resampling, GroupDRO, and DFR by $13\%, 13\%, 4\%$, respectively. This is because, as we discussed in the previous section, our method automatically addresses the synthetic-real bias (\ie bias toward $(B, G)$).

\subsection{FFR Design Ablations}
\label{sec:stages_ablation}

FFR is composed of two stages. Stage 1: Pretraining on balanced synthetic data and Stage 2: Fine tuning on real data. Pretraining is done using ERM, and Fine tuning could be done with ERM or bias mitigation methods like Group-DRO and Resampling, which yield further improvements as discussed in Section \ref{sec:ffr}. In this Section, we study the effect of FFR Stage and Pretraining choices.
\smallskip

\smallskip\noindent\textbf{FFR Stages.} Table \ref{tb:ffr_stages} reports the performance of FFR using Stage 1 only, Stage 2 only, Stage 2 followed by Stage 1, and then Stage 1 followed by Stage 2. Note that Stage 2 (table line 2) by itself yields poor performance. This is expected as Stage 2 amounts to training a model with ERM on the biased data without pretraining on synthetic data. Thus, the model, as expected, learns the bias. Training with Stage 1 by itself (table line 1) while improving performance over Stage 2 by itself doesn't match the performance of FFR (Stage 1 $\xrightarrow[]{}$ Stage 2). This is likely due to the real and synthetic data distribution gap. Therefore, following up Stage 2 with Stage 1 is important to bridge the performance gap. Finally, note that reversing FFR (table line 3) doesn't match the performance of FFR. This is likely because the model overfits over the synthetic data. 
\smallskip

\begin{table}[t!]
\sisetup{table-number-alignment=center}
\sisetup{
  table-align-uncertainty=true,
  separate-uncertainty=true,
  detect-all,
  detect-weight=true,
  detect-shape=true, 
  detect-mode=true,
}
\centering 
\caption{Ablation of FFR stages over SpuCO Animals averaged over all bias ratios. Note how the inclusion of both stages in our chosen order ($1 \xrightarrow{} 2$) achieves the best performance, confirming the importance of our method design. Refer to Section \ref{sec:stages_ablation} for further discussion. }
\label{tb:ffr_stages}

    \begin{tabular}{l  S[table-format=2.1 (1),detect-weight=true] S[table-format=2.1(1),detect-weight=true]}
        \toprule
         & WA & BA \\
        \midrule
        Stage 1                                  & 51.2\kern-2.3em $\scriptstyle\pm0.3$             & 77.0 \kern-2.3em $\scriptstyle\pm 0.9$           \\
        Stage 2                                  & 30.6\kern-2.3em $\scriptstyle\pm5.4$             & 68.1\kern-2.3em $\scriptstyle\pm1.2$             \\
        Stage 2 $\xrightarrow{}$ Stage 1           & 59.6 \kern-2.3em $\scriptstyle\pm 5.4$           & 82.8 \kern-2.3em $\scriptstyle \pm 0.9$          \\
        Stage 1 $\xrightarrow{}$ Stage 2: FFR (ours) & \bfseries 72.6 \kern-2.3em $\scriptstyle\pm 2.3$ & \bfseries 83.4 \kern-2.3em $\scriptstyle\pm 0.3$ \\
        \bottomrule
    \end{tabular}
\end{table}

\begin{table}[t!]
\sisetup{table-number-alignment=center}
\sisetup{
  table-align-uncertainty=true,
  separate-uncertainty=true,
  detect-all,
  detect-weight=true,
  detect-shape=true, 
  detect-mode=true,
}
\caption{Ablation of the pretraining distribution used with FFR over CelebA-HQ averaged over all bias ratios. Note how using a balanced distribution during pretraining is important to achieve good performance. Refer to Section \ref{sec:stages_ablation} for further discussion.}
\label{tb:ffr_pretraining}
\centering 
    \begin{tabular}{l  S[table-format=2.1 (1),detect-weight=true] S[table-format=2.1(1),detect-weight=true]}
        \toprule
         & WA & BA \\
        \midrule
        No Synthetic Augmentation & 58.9\kern-2.3em $\scriptstyle\pm1.8$ & 81.2\kern-2.3em $\scriptstyle\pm0.9$ \\
        FFR w/ Biased Pretraining & 69.6\kern-2.3em $\scriptstyle\pm3.6$ & 84.4 \kern-2.3em $\scriptstyle\pm1.0$ \\
        FFR w/ Balanced Pretraining & \bfseries 82.3\kern-2.3em $\scriptstyle\pm1.1$ &  \bfseries 88.6\kern-2.3em $\scriptstyle\pm0.5$ \\
        \bottomrule
    \end{tabular}
\end{table}

\smallskip\noindent\textbf{FFR Pretraining.}
Table \ref{tb:ffr_pretraining} compares balanced pretraining to pretraining on a biased synthetic distribution that follows the biased real distribution. Note how the performance drops significantly when a biased distribution is used for pretraining. These results offer compelling evidence that using a balanced synthetic distribution during pretraining is crucial.

\subsection{Empirical Investigation of the Real-Synthetic Data bias}
\label{sec:real-synth-bias-mitigation}

The main motivation behind FFR is mitigating the bias that could arise due to distributional differences between Real and Synthetic data (\eg, Synthetic artifacts \cite{Corvi_2023_ICASSP}) when addressing dataset bias. Assuming these differences, we prove in Section \ref{sec:motivation} that every attempt to balance a biased dataset with synthetic data results in a new bias against the pair $(B, G)$ where $B$ denotes the original dataset bias categories and $G$ denotes whether the image is synthetic or real. FFR addresses this issue by simply dedicating a training step for each data source. The positive impact of FFR is evident from the improved worst accuracy performance noted in Section \ref{sec:systemic_analysis_exp}. In this Section, we seek to further verify this claim through two additional experiments outlined below.

\smallskip\noindent\textbf{FFR projects Real and Synthetic image embeddings more tightly}. If prior work's synthetic augmentation methods are biased with respect  to $(B, G)$, then they likely use different features per data source when making a prediction (\eg, the model will use the synthetic artifacts when making predictions about the synthetic data). This, in turn, will likely mean that the synthetic data embeddings are clustered separately from those of real data. However, if the method is not impacted by the real-synthetic bias, then that means that it uses the same correct core features per data source (\eg, features about the dog rather than synthetic artifacts when deciding if the dog is a small dog or a big dog). To verify this claim, in Figure \ref{fig:figure_5} we plot the t-SNE \cite{van2008visualizing}  projections of USB \cite{Mondal2023MinorityOF}, ASB \cite{Ramaswamy_2021_CVPR} and FFR real vs.\ synthetic embeddings. Note how both ASB and USB clearly project real and synthetic data into two separate clusters. This indicates that ASB and USB likely use features unique to the real vs.\ synthetic data (\eg, artifacts). However, our method (FFR) projects these samples more tightly indicating that it uses the same unbiased core features when making predictions.

\begin{figure}[t]
 \centering 
  \includegraphics[width=\linewidth]{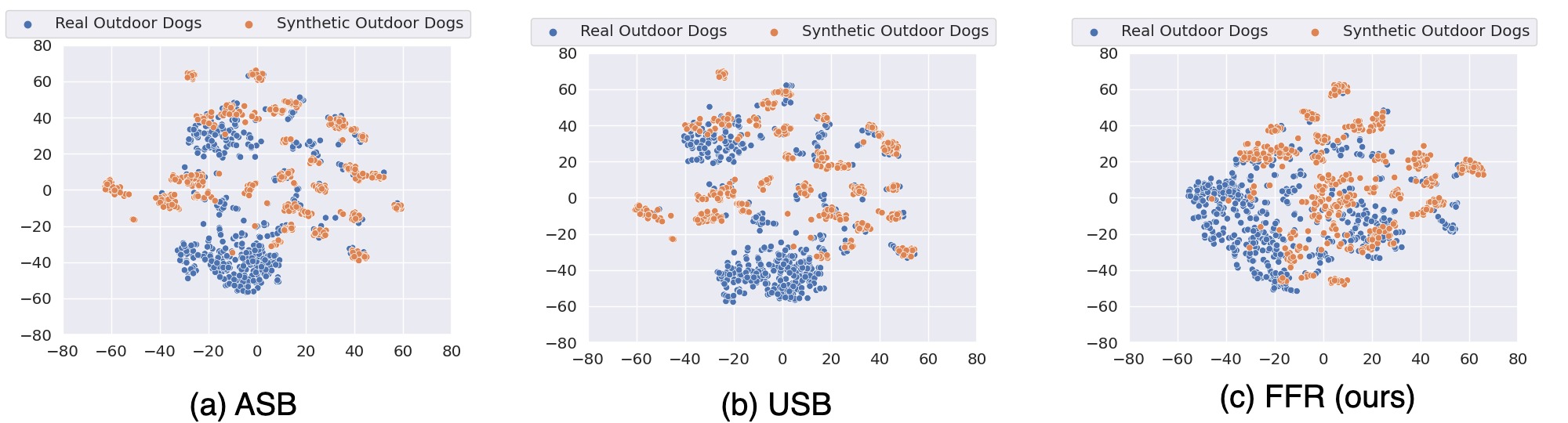}
\caption{Comparing the projections of Real vs. synthetic Data using t-SNE \cite{van2008visualizing} with prior work synthetic augmentation (USB \cite{Mondal2023MinorityOF} and ASB \cite{Ramaswamy_2021_CVPR}) and our synthetic Augmentation method (FFR). Note how our method (FFR) is the best method for projecting Real and Synthetic data close to each other. This is likely because FFR is less impacted by the bias between real and synthetic data and, thus, is posed to learn best from the two data sources. Refer to Section \ref{sec:real-synth-bias-mitigation} for further discussion. } 
\label{fig:figure_5}
\end{figure}

 \begin{table}[t!]
\sisetup{table-number-alignment=center}
\sisetup{
  table-align-uncertainty=true,
  separate-uncertainty=true,
  detect-all,
  detect-weight=true,
  detect-shape=true, 
  detect-mode=true,
}
\caption{Comparison between Uniform Synthetic Balancing (USB), Additive Synthetic Balancing (ASB), and From Fake to Real (FFR) on Synthetic Data versus Real Data using the SpuCO Animals Dataset. Results are averaged over all bias ratios. Refer to Section \ref{sec:real-synth-bias-mitigation} for discussion.}
\label{tb:real_synth_bias_exp}
\centering 
    \begin{tabular}{l  S[table-format=2.1 (1),detect-weight=true] S[table-format=2.1(1),detect-weight=true]  S[table-format=2.1 (1),detect-weight=true] S[table-format=2.1(1),detect-weight=true]}
        \toprule
         & \multicolumn{2}{c}{Real Data} & \multicolumn{2}{c}{Synthetic Data} \\
         & WA & BA & WA & BA \\ \hline
        {USB} & 59.5 \kern-2.3em $\scriptstyle \pm 8.2$ & 75.6 \kern-2.3em $\scriptstyle\pm 1.3$ &  68.7\kern-2.3em $\scriptstyle\pm0.2$ & 82.5 \kern-2.3em $\scriptstyle\pm0.4$ \\
        {ASB} & 48.9\kern-2.3em $\scriptstyle\pm5.8$ & 76.8\kern-2.3em $\scriptstyle\pm1.1$ & 80.7\kern-2.3em $\scriptstyle \pm 0.3$ & 91.1\kern-2.3em $\scriptstyle\pm0.1$ \\
        {FFR (ours)} & \bfseries 72.6\kern-2.3em $\scriptstyle\pm2.3$ & \bfseries 83.4\kern-2.3em $\scriptstyle\pm0.3$ & \bfseries 89.2\kern-2.3em $\scriptstyle\pm0.2$ & \bfseries 96.3\kern-2.3em $\scriptstyle\pm0.3$\\
        \bottomrule
        \end{tabular}
    \vspace{-4mm}
\end{table}

\smallskip\noindent\textbf{FFR is better at learning from Synthetic Data}. If prior work's synthetic augmentation methods are impacted by the bias toward $(B, G$), then they not only will have learned a biased behavior on the real data, but also on the synthetic data. To verify this, observe the worst accuracy compared to the balanced accuracy on real versus synthetic data in Table \ref{tb:real_synth_bias_exp}. Note how both USB \cite{Mondal2023MinorityOF} and ASB \cite{Ramaswamy_2021_CVPR} perform poorly (low worst accuracy compared to balanced accuracy) on both the real and synthetic data. This indicates that rather than using synthetic data to mitigate the bias and generalize to real data, both methods learned to be biased against minority groups on both the real and synthetic data. However, our method (FFR), doesn't suffer from this issue. Specifically,  FFR worst-group accuracy on both the synthetic and real data is higher than ASB and USB and closer to FFR balanced accuracy indicating significantly less biased behavior.

\begin{figure}[t!]
    \centering
    \includegraphics[width=\linewidth]{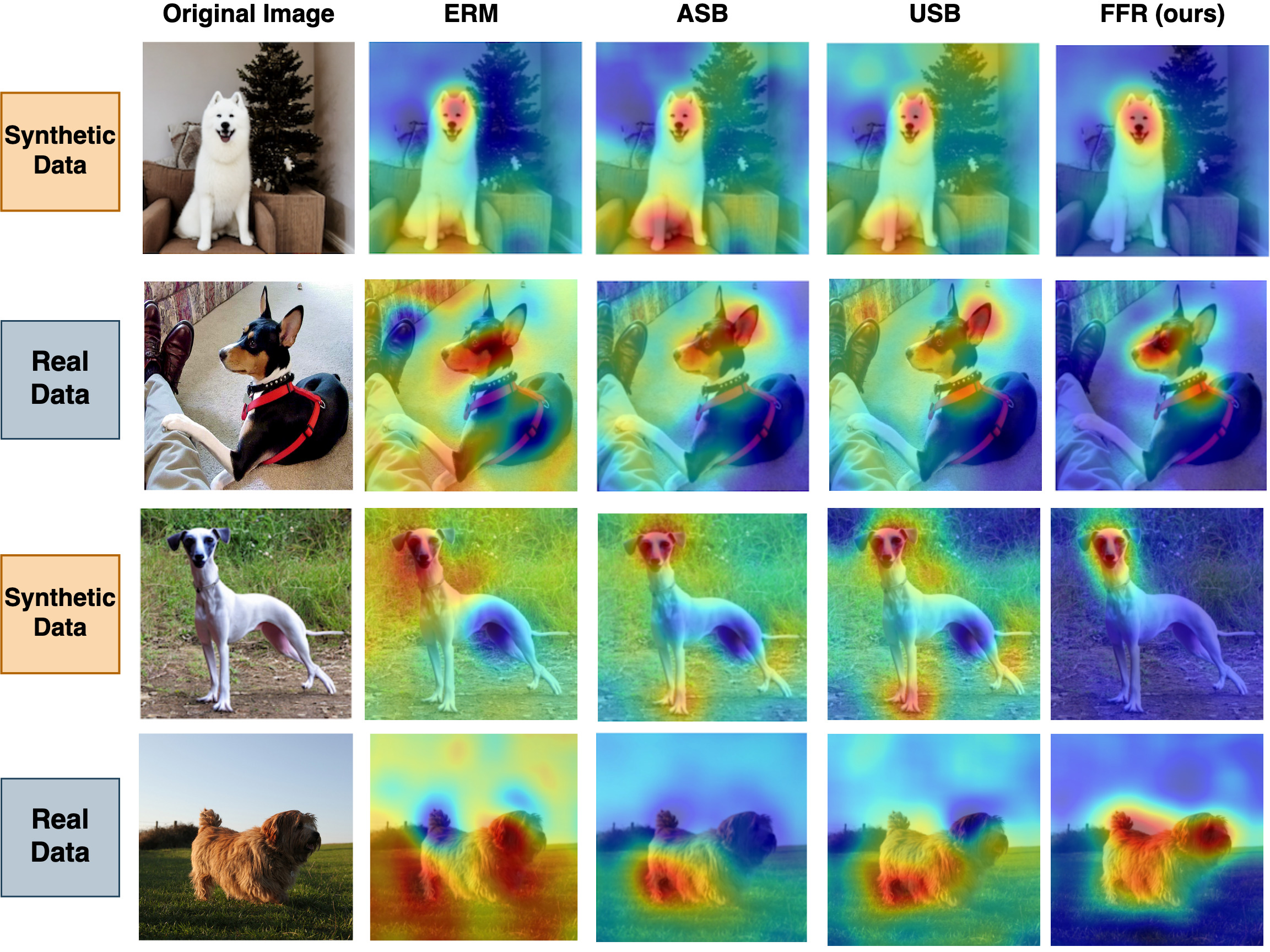}
    \caption{ Saliency maps using RISE \cite{Petsiuk2018rise} when predicting Big Dogs (top two rows) and Small dogs (bottom two rows) using ERM, ASB \cite{Ramaswamy_2021_CVPR}, USB \cite{wang2020deep,Mondal2023MinorityOF} and our method FFR to augment the dataset with synthetic data. The real images are from  SpuCO Animals \cite{joshi2023towards}, and the synthetic data is from Stable Diffusion v1.4 \cite{Rombach_2022_CVPR}. Note how our method (FFR) is the only method that can localize the relevant dog features and not get distracted by spurious background features. Refer to Section \ref{sec:qualt} for discussion. } 
    \label{fig:figure_6}
\end{figure}

\subsection{Qualitative Analysis}
\label{sec:qualt}

In this Section, we conduct a qualitative comparison between ERM without any synthetic data, Additive Synthetic Balancing (ASB) \cite{Ramaswamy_2021_CVPR}, Uniform Synthetic Balancing (USB) \cite{Mondal2023MinorityOF} , and our method From Fake to Real (FFR) on the SpuCo Animals dataset \cite{joshi2023towards} with bias rate $99.9\%$. Note that the dataset contains four classes: Big Dogs, Small Dogs, Landbirds, and Waterbirds. In this Section, we focus on the minority subgroups ``Big Dogs Indoors’’ and ``Small Dogs outdoors’’ and sample a real and synthetic image from each subgroup. For each image and model, we produce a saliency map using RISE \cite{Petsiuk2018rise}. Figure \ref{fig:figure_5} reports our results, where we find FFR is the only method that is able to focus on the dog features while disregarding features from the background in both the synthetic and real images. For example, in the second row, both ASB and USB pay attention to the man's feet as well as the ground floor and what seems to be the bottom of a couch to make predictions. Whereas our method (FFR) only focuses on the dog's features. More interestingly, note how for the synthetic images in rows 1 and 3, prior work methods (ASB and USB) use generative artifacts (\eg, three ``toes’’ for the dog rather than four) to make predictions, whereas our method (FFR) ignores these features. Thus, our method is effective at resolving the issue of bias toward the distributional differences between real and synthetic data.

\section{Conclusion}

We demonstrated through empirical and theoretical work that bias mitigation methods which augment biased datasets with synthetic data fail to address a bias due to distributional difference between real and synthetic data.  To address this issue, we introduced From Fake to Real (FFR): a framework that separates training on synthetic data from training on real data, thus, avoiding the bias between the two data sources. Our systemic analysis over three datasets and five bias settings per dataset demonstrated how our method improved worst group accuracy over prior work methods by up to 20\%. Furthermore, FFR continued to show superior performance even when methods where combined with synthetic-data-free methods. Finally, we provided an extensive ablation that confirms our methods design choices including the pretraining and stage choices. 

\smallskip \noindent\textbf{Limitations and Future Work} In our work, we use large pre-trained text-to-image models to generate synthetic data. While the property of controllable generation using text allows us to generate data that undoes the bias of the real dataset, the generative model might nevertheless inject some biases into the generated data that are not accounted for by the text used to generate the images. For example, Stable Diffusion \cite{Rombach_2022_CVPR} used in this work has been demonstrated to exhibit several biases \cite{luccioni2023stable,bianchi2023easily}. Moreover, as noted in Table \ref{tb:ffr_pretraining}, FFR relies on the generative model being able to faithfully generate a balanced synthetic dataset to achieve good performance; an imbalanced pretraining distribution significantly hurts performance.  However, as prior work noted \cite{liu2023discovering}, recent diffusion models occasionally struggle to follow some prompts espeically ones that require compositionality \cite{gokhale2022benchmarking,marcus2022very}. Therefore, this might jeopardize the ability of diffusion models to generate a balanced pretraining distribution in some cases and thus hurt FFR performance. Therefore, future research that focuses on training fairer and more accurate generative models would alleviate some of these issues. Nevertheless, our approach is generative model agnostic as it is addressing the issue of bias due to data source bias.

\noindent\textbf{Acknowledgements} This material is based upon work supported, in part, by DARPA under agreement number HR00112020054 and the National Science Foundation, including under Grant No.\ 2120322. Any opinions, findings, and conclusions or recommendations expressed in this material are those of the author(s) and do not necessarily reflect the views of the supporting agencies.


%
%
\bibliographystyle{splncs04}
\bibliography{main}

\appendix
\section{Theorem 1 Proof}
\label{apd:proofs}

In our work, we note how prior work synthetic augmentation methods for bias mitigation fail to account for a bias between the real and synthetic distributions. More concretely, assuming real and synthetic data are drawn from different distributions, and we are given a biased dataset $D$,  \ie $P_{D}(Y | B) \neq P_D(Y)$, we argue that it is impossible to guarantee that we can create $\bar{D}$ where $Y$ is not biased toward the pair $(B, G)$. To prove this, we first prove the following helpful Lemma: 

\begin{lemma}
\label{lemma:1}
 Assume that $P_{D}(Y | B) = P_{D}(Y)$, then for any $y, y^{\prime} \in Y$ and $b \in B$, we get $P_{D}(B = b | Y = y) = P_{D}(B = b | Y = y^{\prime})$
\end{lemma}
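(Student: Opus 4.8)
The plan is to recognize that the hypothesis $P_{D}(Y \mid B) = P_{D}(Y)$ is exactly the statement that $Y$ and $B$ are statistically independent under $D$, so that conditioning on the value of $Y$ cannot change the distribution of $B$. Once this is noted, the claim reduces to a single application of Bayes' rule.

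Concretely, I would first fix $b \in B$ and $y \in Y$ with $P_{D}(Y = y) > 0$; if some class has zero probability the conditional probability in the statement is undefined and there is nothing to prove, so this case can be set aside. For the remaining classes I would write
\[
P_{D}(B = b \mid Y = y) = \frac{P_{D}(Y = y \mid B = b)\, P_{D}(B = b)}{P_{D}(Y = y)},
\]
then substitute the per-outcome form of the hypothesis, $P_{D}(Y = y \mid B = b) = P_{D}(Y = y)$, and cancel the common factor $P_{D}(Y = y)$ to obtain $P_{D}(B = b \mid Y = y) = P_{D}(B = b)$. Since the right-hand side does not depend on $y$, running the identical computation with $y^{\prime}$ in place of $y$ gives $P_{D}(B = b \mid Y = y^{\prime}) = P_{D}(B = b)$ as well, and transitivity of equality finishes the proof.

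There is no real obstacle here; the only point needing a sentence of care is the degenerate zero-probability class, which is handled by declaring the conditional vacuous. As an alternative presentation I would mention that one can bypass Bayes' rule altogether by using the joint factorization $P_{D}(B = b, Y = y) = P_{D}(B = b)\,P_{D}(Y = y)$ implied by independence and dividing through by $P_{D}(Y = y)$ directly; I would adopt whichever of the two the paper treats as the definition of ``$P_{D}(Y \mid B) = P_{D}(Y)$''.
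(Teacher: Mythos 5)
Your proposal is correct and follows essentially the same route as the paper's own proof: apply Bayes' rule, substitute $P_{D}(Y = y \mid B = b) = P_{D}(Y = y)$ from the hypothesis, cancel to get $P_{D}(B = b \mid Y = y) = P_{D}(B = b)$, and conclude by noting the right-hand side is independent of $y$. Your extra remarks on the zero-probability case and the joint-factorization alternative are sensible but do not change the argument.
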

\begin{proof}

Given any $b \in B$: 

\begin{align}
    P_{D}(B = b | Y = y) &= \frac{P_{D}(Y = y | B = b) P_{D}(B = b)}{P_{D}(Y = y)} \nonumber \\ 
    &= \frac{P_{D}(Y = y)P_{D}(B)}{P_{D}(Y = y)} \nonumber \\
    &= P_{D}(B = b)
\end{align}

Note that (2) simply follows by definition of  $P_{D}(Y | B) = P_{D}(Y)$. Similarly:

\begin{align}
    P_{D}(B = b | Y = y^{\prime}) &= \frac{P_{D}(Y = y^{\prime} | B = b) P_{D}(B = b)}{P_{D}(Y = y^{\prime})} \nonumber\\
    &= \frac{P_{D}(Y = y^{\prime})P(B)}{P_{D}(Y = y^{\prime})} \nonumber \\ 
    &= P_{D}(B = b)
\end{align}

Thus, 

\begin{align}
     P_{D}(B = b | Y = y) &= P_{D}(B = b) \nonumber \\
     &= P_{D}(B = b | Y = y^{\prime}) 
\end{align}

\end{proof}

\theoremone

\begin{proof}

We will prove this by contradiction. Assume that $P_{\bar{D}}(Y | B, G) = P_{\bar{D}}(Y)$. By Definition of the biased dataset, there exists $b, b^{\prime} \in B$ and $y, y^{\prime} \in Y$ such that 

\begin{enumerate}
    \item $P_{D}(B = b | Y = y) > P_{D}(B = b^{\prime} | Y = y)$
    \item $P_{D}(B = b^{\prime} | Y = y^{\prime}) > P_{D}(B = b | Y = y^{\prime}) $
\end{enumerate}

Now, assume $Count_{D}(Y=y, B=b, G=g)$ is an operator that returns the number of samples given class $y$, bias $b$ and real/synthetic label $g$ in dataset $D$. Moreover, denote the following variables:

\begin{enumerate}
    \item $M = Count_{\bar{D}}(B = b, Y = y, G = real)$
    \item $N =  Count_{\bar{D}}(B = b^{\prime}, Y = y, G = real)$
    \item $M^{\prime} = Count_{\bar{D}}(B = b, Y = y^{\prime}, G = real)$
    \item $N^{\prime} = Count_{\bar{D}}(B = b^{\prime}, Y = y^{\prime}, G = real)$
\end{enumerate}

Then it follows that given any $\bar{D} \in \bar{\mathcal{D}}$, then:

\begin{enumerate}
    \item $M > N$
    \item $ N^{\prime} > M^{\prime}$
\end{enumerate}

Now, observe: 

\begin{align}
    P_{\bar{D}}(B = b, G = real | Y = y) \nonumber &= \frac{M}{\splitfrac{\sum_{b} Count_{\bar{D}}(B=b, Y=y, G=Real)} {+ \sum_{b} Count_{\bar{D}}(B=b, Y=y, G=Synthetic)}}  \nonumber \\
    &> \frac{N}{\splitfrac{\sum_{b} Count_{\bar{D}}(B=b, Y=y, G=Real)} {+ \sum_{b} Count_{\bar{D}}(B=b, Y=y, G=Synthetic)}} \nonumber \\ 
    &=  P_{\bar{D}}(B = b^{\prime}, G = real | Y = y)
\end{align}

Similarly: 

\begin{align}
    P_{\bar{D}}(B = b^{\prime}, G = real | Y = y^{\prime}) \nonumber &= \frac{N^{\prime}}{\splitfrac{\sum_{b} Count_{\bar{D}}(B=b, Y=y^{\prime}, G=Real)} {+ \sum_{b} Count_{\bar{D}}(B=b, Y=y^{\prime}, G=Synthetic)}} \nonumber\\
    &> \frac{M^{\prime}}{\splitfrac{\sum_{b} Count_{\bar{D}}(B=b, Y=y^{\prime}, G=Real)} {+ \sum_{b} Count_{\bar{D}}(B=b, Y=y^{\prime}, G=Synthetic)}}  \nonumber. \\ 
    &=  P_{\bar{D}}(B = b, G = real | Y = y^{\prime})
\end{align}

In order to satisfy the main assumption in our proof, \ie, $P_{\bar{D}}(Y | B, G) = P_{\bar{D}}(Y)$, then following the contrapositive of  Lemma \ref{lemma:1}: 

\begin{align}
    P_{\bar{D}} & (B = b^{\prime}, G = real | Y = y^{\prime}) =   P_{\bar{D}}(B = b^{\prime}, G = real | Y = y) 
\end{align}

and 

\begin{align}
    P_{\bar{D}}&(B = b, G = real | Y = y^{\prime}) =  P_{\bar{D}}(B = b, G = real | Y = y)
\end{align}

To that end, we can change the term:  $\sum_{b} Count_{\bar{D}}(B=b, Y=y, G=Synthetic)$ by adding more synthetic data to the dataset. We can't change $\sum_{b} Count_{\bar{D}}(B=b, Y=y, G=Real)$ because we don't have access to more real data. Therefore, according to the results in (4) and (5), adding more synthetic data to achieve (6) implies that: 

\begin{align}
   P_{\bar{D}}&(B = b, G = real | Y = y) > \nonumber P_{\bar{D}}(B = b, G = real | Y = y^{\prime})
\end{align}

which breaks (7). Similarly, achieving (7) by adding more synthetic data implies that:

\begin{align}
   P_{\bar{D}}&(B = b^{\prime}, G = real | Y = y^{\prime}) > \nonumber P_{\bar{D}}(B = b^{\prime}, G = real | Y = y)
\end{align}

which breaks (6). Thus, arriving to a contradiction. 
 
\end{proof}

\begin{table}[t!]
\sisetup{table-number-alignment=center}
\sisetup{
  table-align-uncertainty=true,
  separate-uncertainty=true,
  detect-all,
  detect-weight=true,
  detect-shape=true, 
  detect-mode=true,
}
\caption{Comparing the effect of lower quality synthetic data on the performance of USB, ASB, and our method FFR on UTK-Face averaged over five bias ratios. Refer to Supplementary Section \ref{sec:lower_qual_synth_appendix} for discussion.}
\label{tb:corrupt_synth}
\centering 
    \begin{tabular}{l  S[table-format=2.1 (1),detect-weight=true] S[table-format=2.1(1),detect-weight=true] S[table-format=2.1(1),detect-weight=true] S[table-format=2.1(1),detect-weight=true]}
        \toprule
         & \multicolumn{2}{c}{Severity 0} & \multicolumn{2}{c}{Severity 5} \\ \hline
         & WA & BC & WA & BC \\ \hline  
        USB & 36.5 \kern-2.3em $\scriptstyle\pm12.8$            & 64.2\kern-2.3em $\scriptstyle \pm2.5$            & 29.5\kern-2.3em $\scriptstyle\pm6.7$            & 64.2\kern-2.3em $\scriptstyle\pm1.2$             \\
        ASB & 40.2\kern-2.3em $\scriptstyle \pm10.3$            & 65.3\kern-2.3em $\scriptstyle \pm2.6$            & 32.9\kern-2.3em $\scriptstyle \pm6.1$           & 65.8\kern-2.3em $\scriptstyle \pm2.0$            \\
        FFR & \bfseries 56.2 \kern-2.3em $\scriptstyle \pm 3.7$ & \bfseries 69.8 \kern-2.3em $\scriptstyle\pm 1.3$ & \bfseries 50.6\kern-2.3em $\scriptstyle \pm4.0$ & \bfseries 67.6 \kern-2.3em $\scriptstyle\pm 1.8$ \\
        \bottomrule
    \end{tabular}

\end{table}

\section{Performance with Lower Quality Synthetic Images}
\label{sec:lower_qual_synth_appendix}
In this Section, we analyze our method (FFR) performance as the quality of synthetic data decreases. To corrupt the images, we use the Pixelate, Defcous Blur, and gaussian noise effects combined from \cite{michaelis2019dragon}. Observe results in Table \ref{tb:corrupt_synth} over UTK-Face averaged over 5 bias ratios where Severity 5 represents the most extreme effect available from \cite{michaelis2019dragon}. Note that while prior work synthetic augmentation methods performance USB \cite{Mondal2023MinorityOF} and ASB \cite{Ramaswamy_2021_CVPR} and our method FFR overall performance decrease, our method (FFR) performance remains best.

\section{Pictorial Representation of Synthetic Augmentation Methods}
\label{sec:pictorial_appndx}

In our work, we present a new synthetic augmentation method FFR that outperforms prior work augmentation methods USB \cite{Mondal2023MinorityOF} and ASB \cite{Ramaswamy_2021_CVPR} . This is because our method addresses the distributional difference issue between real and synthetic data. Figure \ref{fig:figure_8} provides a pictorial comparison of the distributions of synthetic and real data between FFR, ASB, and USB. As we discuss in our paper, FFR separates the two data sources into two distributions and thus avoids exposing the model to the statistical differences between the distributions.

\section{Bias Agnostic Synthetic Data Generation}
\label{sec:bias_agnostic_appendix}

The synthetic data used in our method FFR are generated with Stable Diffusion using the following prompt template: \textit{A photo of \{\textbf{bias}\} \{\textbf{class}\}}. In this Section, we investigate the effect of generating bias agnostic synthetic data, \ie, generating data using the template: \textit{A photo of \{\textbf{class}\}}. Table \ref{tb:bias_agnostic} reports the effects of this change on our method using the dataset CelebA-HQ over all bias ratios. Note that performance decreases significantly when using bias-agnostic synthetic data. This is likely because the synthetic data from Stable Diffusion also exhibits gender imbalance. That is why the bias token in the prompt is important to ensure balanced pertaining data and hence the best performance. 

\begin{figure}[t!]
    \centering
    \includegraphics[width=\linewidth]{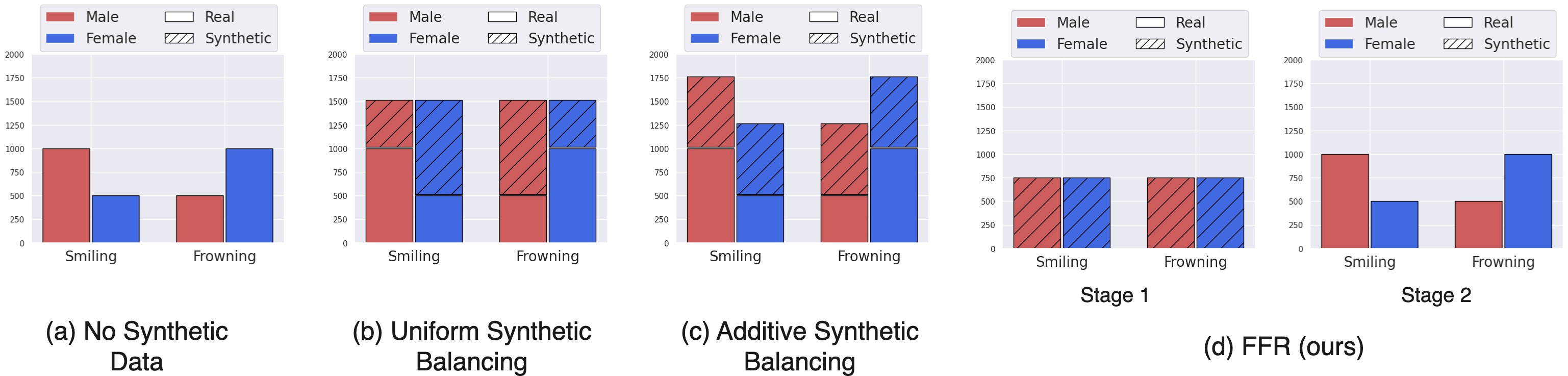}
    \caption{Pictorial representation of prior work synthetic augmentation methods (USB and ASB) and our method FFR. Refer to Supplementary Section \ref{sec:pictorial_appndx} for further discussion.} 
    \label{fig:figure_8}
    \vspace{-1mm}
\end{figure}

\begin{table}[t!]
\sisetup{table-number-alignment=center}
\sisetup{
  table-align-uncertainty=true,
  separate-uncertainty=true,
  detect-all,
  detect-weight=true,
  detect-shape=true, 
  detect-mode=true,
}
\caption{Comparing the effect of bias agnostic synthetic Data versus bias aware data on CelebA-HQ dataset over all bias ratios. Refer to Supplementary Section \ref{sec:bias_agnostic_appendix} for discussion.}
\label{tb:bias_agnostic}
\centering 
    \begin{tabular}{l  S[table-format=2.1 (1),detect-weight=true] S[table-format=2.1(1),detect-weight=true] S[table-format=2.1(1),detect-weight=true]}
        \toprule
         & WA & BC \\ \hline
        ERM & 58.9\kern-2.3em $\scriptstyle\pm1.8$ & 81.2\kern-2.3em $\scriptstyle\pm0.9$\\ \hline
        FFR (Bias Agnostic Data) & 77.6\kern-2.3em $\scriptstyle\pm2.4$ & 86.4\kern-2.3em $\scriptstyle\pm0.8$ \\
        FFR (Bias Aware Data)& \bfseries 82.3\kern-2.3em $\scriptstyle\pm1.1$ & \bfseries 88.6\kern-2.3em $\scriptstyle\pm0.5$ \\
        \bottomrule
    \end{tabular}

\end{table}

\section{Hyperparameters}
\label{sec:hyper_appendix}

For experiments in Sections 4.1 and 4.2, we provide the learning rates used to train our models for each dataset in Tables \ref{tb:hyperparameters_a} following a grid search over the validation set using the learning rates $\{1e-04, 1e-05, 1e-06, 1e-07\}$. Note that for weight decay, we use $1e-05$ for UTK-Face, CelebA-HQ for SpuCo Animals. Moreover, as discussed in Section 4 in the paper, in Step 2 in FFR, we treat weather the batch norm is frozen or not as a hyperparameter. Table \ref{tb:batch_norm_freeze} shows whether the batch norm was frozen or not for each bias ratio on each dataset. With respect to training length, we train all models for 20 Epochs over all datasets. All models are also pretrained on ImageNet. Finally, when creating the various bias split ratios, we ensure each subgroup has at least 10 samples in the validation set to ensure that the worst group accuracy estimates are not just noise. 

\begin{table}[t!]
\centering
\caption{Learning Rates for datasets UTK-Face and CelebA-HQ and SpuCO Animals in experiments in Sections 4.1 and 4.2 in the main paper. Refer to Supplementary Section \ref{sec:hyper_appendix} for further details.}
\label{tb:hyperparameters_a}

\begin{subtable}{\linewidth}
\centering
\begin{tabular}{l|l|l|l|l|l|l|l|l|l|l|l|l|l|l|l}
\toprule
& & \multicolumn{5}{c|}{CelebA-HQ} & \multicolumn{5}{c|}{UTK-Face} & \multicolumn{4}{c}{Spuco-Animals} \\ \midrule
 & & 90\%  & 95\% & 97\% & 99\% & 99.9\% & 90\%  & 95\% & 97\% & 99\% & 99.9\% & 95\% & 97\% & 99\% & 99.9\% \\  \midrule
\multirow{3}{*}{None} & ERM & 1e-4 & 1e-4 & 1e-4 & 1e-4 & 1e-4 & 1e-4 & 1e-7 & 1e-7 & 1e-4 & 1e-7 & 1e-7 & 1e-7 & 1e-5 & 1e-5 \\ 
 & GroupDRO & 1e-4 & 1e-4 & 1e-4 & 1e-5 & 1e-4 & 1e-4 & 1e-6 & 1e-7 & 1e-4 & 1e-4 & 1e-5 & 1e-7 & 1e-5 & 1e-5 \\ 
 & Resampling & 1e-6 & 1e-6 & 1e-4 & 1e-6 & 1e-6 & 1e-4 & 1e-7 & 1e-7 & 1e-6 & 1e-6 & 1e-7 & 1e-7 & 1e-6 & 1e-6 \\ \midrule
\multirow{3}{*}{USB} & ERM & 1e-4 & 1e-4 & 1e-7 & 1e-4 & 1e-4 & 1e-4 & 1e-7 & 1e-7 & 1e-4 & 1e-4 & 1e-7 & 1e-7 & 1e-7 & 1e-7 \\ 
 & GroupDRO & 1e-4 & 1e-4 & 1e-6 & 1e-4 & 1e-4 & 1e-5 & 1e-7 & 1e-7 & 1e-4 & 1e-4 & 1e-6 & 1e-6 & 1e-5 & 1e-5 \\ 
 & Resampling & 1e-6 & 1e-7 & 1e-7 & 1e-6 & 1e-6 & 1e-4 & 1e-7 & 1e-7 & 1e-6 & 1e-6 & 1e-7 & 1e-7 & 1e-6 & 1e-6 \\ \midrule
\multirow{3}{*}{ASB} & ERM & 1e-4 & 1e-4 & 1e-6 & 1e-4 & 1e-4 & 1e-4 & 1e-7 & 1e-7 & 1e-4 & 1e-4 & 1e-7 & 1e-7 & 1e-5 & 1e-5 \\ 
 & GroupDRO & 1e-4 & 1e-4 & 1e-6 & 1e-4 & 1e-4 & 1e-5 & 1e-7 & 1e-7 & 1e-4 & 1e-5 & 1e-7 & 1e-7 & 1e-5 & 1e-6 \\ 
 & Resampling & 1e-6 & 1e-6 & 1e-4 & 1e-6 & 1e-6 & 1e-4 & 1e-7 & 1e-7 & 1e-6 & 1e-6 & 1e-7 & 1e-7 & 1e-6 & 1e-6 \\ \midrule
\multirow{3}{*}{FFR} & ERM & 1e-7 & 1e-7 & 1e-7 & 1e-7 & 1e-9 & 1e-5 & 1e-8 & 1e-9 & 1e-9 & 1e-9 & 1e-8 & 1e-8 & 1e-8 & 1e-7 \\ 
 & Resampling & 1e-4 & 1e-4 & 1e-7 & 1e-6 & 1e-9 & 1e-5 & 1e-6 & 1e-8 & 1e-6 & 1e-6 & 1e-7 & 1e-7 & 1e-7 & 1e-6 \\ 
 & GroupDRO & 1e-4 & 1e-4 & 1e-7 & 1e-6 & 1e-7 & 1e-5 & 1e-7 & 1e-7 & 1e-5 & 1e-5 & 1e-8 & 1e-8 & 1e-6 & 1e-7 \\ 

\bottomrule
\end{tabular}
\end{subtable}
\end{table}
\begin{table}[t!]
\centering
\caption{Whether batch norm was frozen during Step 2 of FFR on datasets UTK-Face, CelebA-HQ and SpuCO Animals Sections 4.1 and 4.2 experiments in the main paper. T refers to True and F refers to False.  Refer to Supplementary \ref{sec:hyper_appendix} for further details  }
\label{tb:batch_norm_freeze}

\begin{subtable}{\linewidth}
\centering
\begin{tabular}{l|l|l|l|l|l|l|l|l|l|l|l|l|l|l}
\toprule
& \multicolumn{5}{c|}{CelebA-HQ} & \multicolumn{5}{c|}{UTK-Face} & \multicolumn{4}{c}{Spuco-Animals} \\ \midrule
 & 90\%  & 95\% & 97\% & 99\% & 99.9\% & 90\%  & 95\% & 97\% & 99\% & 99.9\% & 95\% & 97\% & 99\% & 99.9\% \\  \midrule
 ERM & T & F & F & F & T & T & T & T & T & T & F & F & F & F \\ 
 GroupDRO & T & T & T & T & F & T & T & T & T & T & F & F & T & F \\ 
 Resampling & T & T & T & T & T & T & F & T & T & T & F & F & T & F \\

\bottomrule
\end{tabular}
\end{subtable}
\end{table}

\begin{table}[t!]
\sisetup{table-number-alignment=center}
\sisetup{
  table-align-uncertainty=true,
  separate-uncertainty=true,
  detect-all,
  detect-weight=true,
  detect-shape=true, 
  detect-mode=true,
}
\caption{Compare the performance between prior work synthetic augmentation methods (USB and ASB) and our method FFR using a ViT-32 backbone. Refer to Supplementary Section \ref{sec:training_vit} for further discussion. }
\label{tb:vit_compare}
\centering 
    \begin{tabular}{l  S[table-format=2.1 (1),detect-weight=true] S[table-format=2.1(1),detect-weight=true]}
        \toprule
         & WA & BC  \\ \hline  
        None & 54.2 \kern-2.3em $\scriptstyle\pm4.0$             & 78.9 \kern-2.3em $\scriptstyle\pm1.2$            \\
        USB  & 64.9 \kern-2.3em $\scriptstyle\pm2.6$             & 82.1\kern-2.3em $\scriptstyle \pm1.4$            \\
        ASB  & 60.1\kern-2.3em $\scriptstyle \pm1.7$             & 81.1\kern-2.3em $\scriptstyle \pm0.7$            \\
        FFR  & \bfseries 82.8 \kern-2.3em $\scriptstyle \pm 1.5$ & \bfseries 87.7 \kern-2.3em $\scriptstyle\pm 0.5$ \\
        \bottomrule
    \end{tabular}

\end{table}

\section{Training with ViT(s)}
\label{sec:training_vit}

In our work,  we present a new synthetic augmentation method FFR that outperforms prior work augmentation methods USB \cite{Mondal2023MinorityOF} and ASB \cite{Ramaswamy_2021_CVPR}. We use a ResNet50 through all our experiments. In this Section, we test using a ViT 32B \cite{dosovitskiy2020image}. Table \ref{tb:vit_compare} shows the results on the CelebA-HQ dataset. Note how FFR results in the best performance. This is because, as discussed in the main paper, FFR automatically mitigates the bias between the two data sources: real and synthetic data.

\end{document}